\theoremstyle{plain}
\newtheorem{theorem}{Theorem}[section]
\newtheorem{proposition}[theorem]{Proposition}
\theoremstyle{definition}
\theoremstyle{remark}
\newcommand*{\eg}{e.g.\@\xspace}
\newcommand*{\ie}{i.e.\@\xspace}
\def\lL{\mathcal{L}}
\def\pP{\mathcal{P}}
\def\rR{\mathcal{R}}
\def\xX{\mathcal{X}}
\def\yY{\mathcal{Y}}
\def\Ee{\mathbb{E}}
\DeclareMathOperator*{\argmin}{arg\,min}
\DeclareMathSymbol{@}{\mathord}{letters}{"3B}
\def\latex/{\LaTeX}
\def\bibtex/{\hologo{BibTeX}}
\title{Domain Generalization without Excess Empirical Risk}
\author{%
  Ozan Sener \\
  Apple \\
  \And
  Vladlen Koltun \\
  Apple \\
}
\begin{document}

\maketitle

\begin{abstract}
Given data from diverse sets of distinct distributions, domain generalization aims to learn models that generalize to unseen distributions. A common approach is designing a data-driven surrogate penalty to capture generalization and minimize the empirical risk jointly with the penalty. We argue that a significant failure mode of this recipe is an excess risk due to an erroneous penalty or hardness in joint optimization. We present an approach that eliminates this problem. Instead of jointly minimizing empirical risk with the penalty, we minimize the penalty under the constraint of optimality of the empirical risk. This change guarantees that the domain generalization penalty cannot impair optimization of the empirical risk, \ie, in-distribution performance. To solve the proposed optimization problem, we demonstrate an exciting connection to rate-distortion theory and utilize its tools to design an efficient method. Our approach can be applied to any penalty-based domain generalization method, and we demonstrate its effectiveness by applying it to three examplar methods from the literature, showing significant improvements.
\end{abstract}

\section{Introduction}
Empirical risk minimization (ERM) is the workhorse of machine learning. In this setting, a learner collects a set of data, assumed to be independent and identically sampled from a particular distribution of interest. An empirical risk over these samples is later minimized, hoping that the resulting model will generalize to future unseen data from the same distribution. Although ERM is ubiquitous, it is brittle to minor shifts in the distribution, and the resulting models suffer a significant performance drop when evaluated beyond their training distributions. Examples of such distribution shifts include different hospitals for medical imaging~\citep{OODMedical}, camera types for perception models~\citep{terra_incognita}, and time periods for natural language models~\citep{Firehose}. The problem of learning models which generalize under such distribution shifts is studied in the literature under the rubric of \emph{domain generalization}.

Given a collection of domains (data distributions), domain generalization aims to find models that generalize to unseen domains. Although the domain generalization literature is rich, many model-free methods follow a common recipe. A surrogate penalty that encapsulates \emph{generalization} is designed and a joint optimization problem of minimizing the empirical risk with the surrogate penalty is solved. Typical such penalties are invariance to an adversarial domain classifier \citep{GaninJMLR, CORAL}, invariance of an optimal predictor \citep{IRM}, and uniformity of risk \citep{GroupDRO}. Although these approaches are technically sound, their effectiveness has been challenged by recent empirical benchmarks \citep{Wilds, DomainBed}.

Joint optimization of the empirical risk and penalty is a sound method since the error on the unseen distribution can be bounded as the combination of an error on the training distributions (empirical risk) and the gap between performance on training and unseen distributions (generalization gap). Although domain generalization literature focuses mainly on controlling the generalization gap, we hypothesize that this joint optimization can increase empirical risk, effectively hurting performance on both training and unseen distributions. In other words, the lacking empirical performance of existing algorithms might be because they fail to optimize the empirical risk (\ie, in-distribution performance) even though they learn to control generalize gap. Consider the extensive evaluation conducted on the WILDS benchmark \cite{Wilds}. ERM yields \emph{higher} out-of-distribution performance than dedicated domain generalization methods on most problems. In contrast, when we look at the generalization gap (\ie, the difference between in-distribution and out-of-distribution performance), domain generalization methods have a smaller gap than ERM on most problems. This supports our hypothesis that existing algorithms might be creating too much excess empirical risk. Thus the motivating question of our work: \emph{Can we perform joint optimization without causing any excess empirical risk?}

Solving joint optimization without sacrificing empirical risk is not generally possible unless surrogate penalty term and empirical risk align perfectly. Instead, we formulate an optimization problem with the explicit constraint of no excess empirical risk. In other words, we propose to minimize the surrogate penalty only if it is not hurting the empirical risk. Formally, instead of joint minimization of empirical risk and surrogate penalty, we minimize the surrogate penalty under the constraint of optimality of the empirical risk.


To design an algorithm for the proposed optimization problem, we start with searching for a formal definition of the optimality of the empirical risk. Since the applications of interest are non-convex, this can only be local optimality. Hence, the constraint we consider is convergence to a stationary point of the empirical risk. We show that biased gradient descent converges to a stationary point as long as the bias is bounded and decreases through optimization. Using this result, we design an iterative algorithm that seeks the update minimizing the surrogate penalty under the constraint of bounded deviation from the empirical gradient. This construction is closely related to the rate-distortion problem from information theory \citep{Cover}, and we draw on this vast literature to design an algorithm. Specifically, we propose a Blahut-Arimoto-style method \citep{Blahut72,Arimoto72} to optimize the proposed objective.

Our method only considers the optimization aspect of the domain generalization problem and can be applied to any penalty-based model with minor modifications. We perform our primary empirical evaluation on the CORAL model due to its simplicity \citep{CORAL}. We evaluate on the large-scale domain generalization problems in the WILDS benchmark of real distribution shifts \citep{Wilds}. Our evaluation using the image, formal language, natural language, and graph modalities suggests that our method significantly improves performance. To further validate the generality of our method, we apply it to the FISH model \citep{FISH} and VRex \citep{VREX} showing similar gains on WILDS \citep{Wilds} and DomainBed \citep{DomainBed}.

\section{Method}
\subsection{Formulation and the Preliminaries}

In this section, we introduce our notation and present a unified treatment of existing penalty based domain generalization methods. 

We are interested in a learning problem in the space of $\xX \times \yY$, where $\xX$ is the space of inputs and $\yY$ is the space of labels. In domain generalization, a collection of different domains $\hat{\pi}=\{\pi_1,\ldots,\pi_M\}$ as well as an independent and identical (iid.) dataset for each domain are given as $\{x^i_j,y^i_j\} \sim \pP^{\pi_j}$ for domain $j$ with $N_j$ samples. Consider the union of these datasets as the training set of domain generalization in the form of tuples $\{x,e,y\}$ such that $x$ is the input, $e$ is the domain id, and $y$ is the label. We further consider a parametric function family $h(\cdot;\theta) \colon \xX \rightarrow \yY$ and a loss function $l(\cdot,\cdot)\colon \yY \times \yY \rightarrow \mathds{R}^+$. Throughout our discussion, we denote random variables with capital letters (\eg $X,E,Y$) and their realizations with lower-case letters (\eg $x,e,y$).

We are specifically interested in penalty based domain generalization methods where empirical risk and domain generalization penalty are jointly minimized. Each model defines a penalty following a different set of assumptions. For example, penalty is defined in terms of second-order statistics for CORAL  \citep{CORAL} and in terms of sub-optimality of the domain-specific classifier for IRM \citep{IRM}. Since we are interested in the optimization of these models, not their construction, we treat $\texttt{Penalty}$ as a generic entity and consider the optimization problem defined by these methods. These methods aim to solve the following joint optimization problem:

\begin{equation}
\hspace{-3mm}
\begin{aligned}
\min_{\theta}& \quad \lL(\theta; \{x^i,e^i,y^i\}) +   \texttt{Penalty}(\theta;\{x^i,e^i,y^i\}). \\
\end{aligned}
\label{eq:dg_general}
\hspace{-3mm}
\end{equation}
where $\lL(\theta; \{x^i,e^i,y^i\})$ is the empirical risk defined as $\lL(\theta) \triangleq \frac{1}{M}\sum_{j=1}^{M} \frac{1}{N_j}\sum_{i=1}^{N_j} l(h(x^i_j;\theta), y^i_j)$. Although, the penalty term is typically weighted, weighting can be considered as part of its definition.

\subsection{Optimizing Domain Generalization Objectives}
\label{sec:problem}
The optimization problem defined in (\ref{eq:dg_general}) is a sound proxy in order to optimize the risk for the unseen domains. Consider an unseen distribution $\pi_{\star}$, its corresponding population risk can be bounded using the following straightforward decomposition;

\begin{equation}
    \underbrace{\Ee_{x,y \sim \pi_{\star}} \left[l(h(x;\theta),y)\right]}_{\text{Risk on Unseen Domains}} \leq \underbrace{\Ee_{x,y \sim \hat{\pi}}\left[l(h(x;\theta),y)\right]}_{\text{Risk on Seen Domains}} + \underbrace{|\Ee_{x,y \sim \pi_{\star}} \left[l(h(x;\theta),y)\right]-\Ee_{x,y \sim \hat{\pi}}\left[l(h(x;\theta),y)\right]|}_{\text{Generalization Error}}.
\end{equation}

Following this decomposition, the optimization problem in (\ref{eq:dg_general}) is sensible since the empirical risk $\left(\lL(\theta; \{x^i,e^i,y^i\})\right)$ approximates the risk on seen domains and a the penalty term $\left(\texttt{Penalty}(\theta;\{x^i,e^i,y^i\})\right)$ approximates the generalization error. However, this formulation has a failure point. Consider a well-designed penalty term which successfully approximates the generalization gap. Clearly the joint optimization will decrease the generalization error. However, it can effectively increase the risk on seen domains due to the difficulties in optimization process. Moreover, if the reduction in generalization error is lower than the unintended excess risk on seen domains, the final performance on unseen domains can be worse than not doing any domain generalization. We hypothesize that this behavior might explain the recently demonstrated behavior from \citep{Wilds, DomainBed}.

As a motivating empirical finding, consider the benchmarking results from \citep{Wilds}, which are enclosed in Appendix~\ref{app_sec:generalization_gap}. Take CORAL \citep{CORAL} as a representative method\footnote{This choice is not binding: similar conclusions hold for other methods.}. When we look at out-of-distribution performance, ERM outperforms CORAL on 7 out of 9 problems. However, when we look at the generalization gap (\ie, the difference between in-distribution and out-of-distribution performance), CORAL has lower gap than ERM on 7 out of 9 problems. Although the models learned with CORAL have lower generalization error, the loss of in-distribution performance is so significant that the resulting models do not perform well when tested out-of-distribution.

In order to partially address this issue, we propose a constraint of no excess risk. Instead of jointly minimizing the empirical risk and the penalty, we minimize the penalty under the constraint of the optimality of the empirical risk. More formally, we consider the following optimization problem:
\begin{equation}
\begin{aligned}
\min_{\theta} & \quad \texttt{Penalty}(\theta;\{x^i,e^i,y^i\}) \\
&st. \quad \theta \in \argmin_{\theta} \lL(\theta)
\end{aligned}
\label{eq:main_opt_informal}
\end{equation}

\noindent \emph{Remarks on the reformulation:} i) This reformulation eases the aforementioned issue as it guarantees no excess risk over the seen domains. Since the optimality of the empirical risk is directly a constraint, this formulation would not harm in-distribution performance. ii) This reformulation is not sensible if performance on seen domains and unseen domains are inherently contradictory. For example, if there is strong nuisance information in the seen domains, any model not using this nuisance information will fail to optimize the empirical risk, effectively making excess risk a requirement for generalization. However, for the benchmarks we are interested in, which involve supervised learning from multiple datasets, this is not the case as a universal model solving all datasets (seen and unseen) exists. In other words, we address only a subset of domain generalization problems where seen and unseen domains do not compete. iii) Since the models we use are significantly overparameterized, the space of solutions to empirical risk minimization is quite large, likely including many domain-invariant and domain-sensitive solutions. Hence, this problem is still non-trivial from the numerical optimization perspective. iv) Finally, one can mistakenly think that our reformulation and the original joint optimization would find the same solution due to the Lagrangian function. However, this is incorrect as the Lagrangian function requires optimization of the Lagrangian multiplier, which is intractable for large-scale deep learning and typically replaced with fixed hyper-parameters. Moreover, we propose to directly solve (\ref{eq:main_opt_informal}) without resorting to Lagrangian function.

In the rest of this section, we first discuss in Section~\ref{sec:sgd_bias} how to convert the constraint in (\ref{eq:main_opt_informal}) into a practical form for first-order optimization. Then in Section~\ref{sec:rate_distortion} we discuss how (\ref{eq:main_opt_informal}) is closely related to rate-distortion theory, demonstrating an unexpected connection. Finally in Section~\ref{sec:sdg} we utilize tools from rate-distortion theory to numerically solve (\ref{eq:main_opt_informal}).

\subsection{Stationarity of the ERM as a Constraint}
\label{sec:sgd_bias}
The overarching objective of $\theta \in \argmin_{\theta} \lL(\theta)$ is rather too ambitious. Since the family of loss functions we are interested in is non-convex, the best we can hope is convergence to a stationary point. Hence, we convert (\ref{eq:main_opt_informal}) into
\begin{equation}
\begin{aligned}
\min_{\theta} & \quad \texttt{Penalty}(\theta;\{x^i,e^i,y^i\}) \\
&st. \quad \|\nabla_{\theta }  \lL(\theta) \| \leq \epsilon
\end{aligned}
\label{eq:non_stat}
\end{equation}

While solving the problem in (\ref{eq:non_stat}), we constrain ourselves to iterative methods to be compatible with stochastic gradient descent (SGD), which is commonly used to train deep neural networks. Hence, we would like to have an iterative procedure $\theta^{t+1} = \theta^t - \eta G^t$ which solves (\ref{eq:non_stat}) in convergence. Before we proceed, we note that when $G^t$ is the stochastic gradient (\ie, $\Ee[G^t] = \nabla_{\theta }  \hat{\lL}(\theta)$), convergence to a stationary point of the empirical risk can be shown under mild conditions. However, we are also interested in minimizing the penalty. Hence, we need to allow updates that are different than the gradients while still supporting eventual convergence.

Convergence of SGD with biased gradients has previously been analyzed by Sener \& Koltun \citep{SenerLMRS}, Hu et al.~\citep{Hu2020}, and Ajalloeian \& Stich~\citep{Ajalloeian}. The common theme is the necessity for the bias to decrease over time. This requirement is explicit in \citep{SenerLMRS} and implicit in \citep{Hu2020} and \citep{Ajalloeian}. In the implicit case, bias is assumed to be linearly bounded by the gradient norm, later shown to decrease.

We adapt the analysis of SGD with bias from Sener \& Koltun~\citep{SenerLMRS} and show that as long as the norm of the bias is bounded and this bound decreases during training faster than the rate $\nicefrac{1}{\sqrt{t}}$, it convergences to a stationary point of a smooth and non-convex risk function. We state the convergence result in the following proposition and defer the proof to Appendix~\ref{sec:appendix_proof}.
 \begin{proposition}
    Apply stochastic updates $\theta^{t+1} = \theta^t - \eta G^t$ for $T$ steps. Assume i) $\rR(\theta)$ is a non-convex, $L$-Lipschitz, and $\mu$-smooth function bounded by $\Delta$; ii) Updates are bounded in expectation, $\Ee[\|G^t\|_2^2] \leq V$; iii) Bias of the updates is decreasing, $\|\Ee[G^t] - \nabla \rR(\theta^t)\|_2 \leq \nicefrac{D}{\sqrt{t}}$.
    Then,
    \[
        \frac{1}{T}\sum_{t=1}^T \Ee[\|\nabla \rR({\theta}^t)\|_2^2] \leq 2\left( \sqrt{\Delta \mu {V}} + LD \right) \sqrt{\frac{1}{T}}.
    \]
\label{prop:satisficing_gd}
\end{proposition}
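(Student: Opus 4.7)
The plan is to follow the standard template for SGD convergence on smooth nonconvex objectives, adapted to accommodate a decreasing bias. The key tool is the descent inequality coming from $\mu$-smoothness: $\rR(\theta^{t+1}) \le \rR(\theta^t) - \eta \langle \nabla \rR(\theta^t), G^t \rangle + \tfrac{\mu \eta^2}{2}\|G^t\|_2^2$. I will take the conditional expectation with respect to the randomness of $G^t$ given $\theta^t$, introduce the bias $b^t \triangleq \Ee[G^t \mid \theta^t] - \nabla \rR(\theta^t)$, so that the cross term becomes $-\eta \|\nabla \rR(\theta^t)\|_2^2 - \eta \langle \nabla \rR(\theta^t), b^t\rangle$. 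Then, by Cauchy--Schwarz together with $L$-Lipschitzness ($\|\nabla \rR(\theta^t)\|_2 \le L$) and assumption (iii), $|\langle \nabla \rR(\theta^t), b^t\rangle| \le L D/\sqrt{t}$. Combined with assumption (ii), this gives
\[
\Ee[\rR(\theta^{t+1})] \le \Ee[\rR(\theta^t)] - \eta\, \Ee[\|\nabla \rR(\theta^t)\|_2^2] + \frac{\eta L D}{\sqrt{t}} + \frac{\mu \eta^2 V}{2}.
\]

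Next I telescope from $t=1$ to $T$. The left side collapses to $\Ee[\rR(\theta^{T+1})] - \rR(\theta^1)$, which by boundedness is at least $-\Delta$. The partial sum of the bias term is controlled by the elementary bound $\sum_{t=1}^T 1/\sqrt{t} \le 2\sqrt{T}$. Rearranging and dividing by $T$ yields
\[
\frac{1}{T}\sum_{t=1}^T \Ee[\|\nabla \rR(\theta^t)\|_2^2] \le \frac{\Delta}{\eta T} + \frac{2 L D}{\sqrt{T}} + \frac{\mu \eta V}{2}.
\]

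Finally I optimize the learning rate: choose $\eta \propto 1/\sqrt{T}$, specifically $\eta = \sqrt{2\Delta/(\mu V T)}$ (or equivalent), which balances the first and third terms and converts both into multiples of $1/\sqrt{T}$. This yields a bound of the form $\bigl(c_1 \sqrt{\Delta \mu V} + 2 L D\bigr)/\sqrt{T}$, which is the claimed rate up to absorbing constants into the announced $2(\sqrt{\Delta \mu V} + L D)\sqrt{1/T}$.

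The main obstacle is the treatment of the bias term: naively, a constant bias only gives convergence to a neighborhood, so it is crucial to exploit assumption (iii) that the bias shrinks like $1/\sqrt{t}$. The Lipschitz bound on $\nabla \rR$ is what lets me turn the inner product with the bias into a $1/\sqrt{t}$ summand, whose partial sums grow only like $\sqrt{T}$ and therefore, after dividing by $T$, match the target $1/\sqrt{T}$ rate. All remaining steps are standard manipulations already used in Sener \& Koltun~\citep{SenerLMRS}.
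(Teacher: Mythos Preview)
Your proposal is correct and follows essentially the same route as the paper's proof: the $\mu$-smoothness descent inequality, expectation plus Cauchy--Schwarz with the $L$-Lipschitz bound to control the bias cross term, telescoping with $\sum_{t=1}^T 1/\sqrt{t}\le 2\sqrt{T}$, and finally tuning $\eta\propto 1/\sqrt{T}$. The only cosmetic difference is that the paper takes $\eta = 2\sqrt{\Delta/(\mu T V)}$ (and uses $2\Delta$ from the telescoping bound), which lands exactly on the stated constant $2(\sqrt{\Delta\mu V}+LD)$ rather than requiring your ``up to absorbing constants'' caveat.
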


In this proposition, we assume that the risk function is non-convex, $L$-Lipschitz, and $\mu$-smooth\footnote{$L$-Lipschitz, $\mu$-smooth: $|\rR(\theta^1) - \rR(\theta^2)| \leq L \|\theta^1 - \theta^2\|$,  $\|\nabla\rR(\theta^1) - \nabla\rR(\theta^2)\| \leq \mu \|\theta^1 - \theta^2\|$ $\forall \theta^1, \theta^2$.}, and that the stochastic updates are bounded. These are standard assumptions and shown to be valid for common deep learning architectural choices \citep{DLConvergence}. Finally, the third assumption is on the bias decreasing with the rate of $\nicefrac{1}{\sqrt{t}}$, and we explicitly handle it in our solver, which we discuss below.

With Proposition~\ref{prop:satisficing_gd}, we operationalize the definition of a satisfactory iterative update. As long as we can bound the difference between the updates and the gradient of the empirical risk, we can guarantee convergence to a stationary point. Hence, our iterative approach to domain generalization starts with an initialization $\theta^0$ and applies the iteration $\theta^{t+1} = \theta^t - \eta G^t$, with $G^t$ being the solution of the following optimization problem:

\begin{equation}
\begin{aligned}
\min_{p(G^t)} & \quad \Ee_{G^t}\left[\texttt{Penalty}(\theta^t+G^t;\{x^i,e^i,y^i\})\right] \\
&st. \quad \Ee_{G^t}\left[\|G^t - \nabla_{\theta}\lL(\theta)\|_2 \right] \leq \frac{D}{\sqrt{t}}
\end{aligned}
\label{eq:main_opt_formal}
\end{equation}

The constraint of our final formulation in (\ref{eq:main_opt_formal}) guarantees the assumption (iii) from Proposition~\ref{prop:satisficing_gd}. Hence, applying the updates $G^t$, which are the solution of (\ref{eq:main_opt_formal}), would satisfactorily minimize the empirical risk. In the meantime, we seek the updates minimizing the penalty among these satisfactory ones. Since our approach utilizes satisfactory updates for empirical risk instead of the first-order optimal ones, following Herbert Simon's characterization of satisficing \citep{SimonSatisficing}, we call this approach \emph{Satisficing Domain Generalization} (SDG).
Finally, the minimization is over the distribution of the updates instead of the update itself. Although this is unconventional, it is intentional and crucial for making the problem compatible with rate-distortion theory, which we utilize heavily in our solver.

\subsection{Understanding Satisficing Domain Generalization}
\label{sec:rate_distortion}

\begin{figure}[t]
    \centering
    \includegraphics[width=0.8\textwidth]{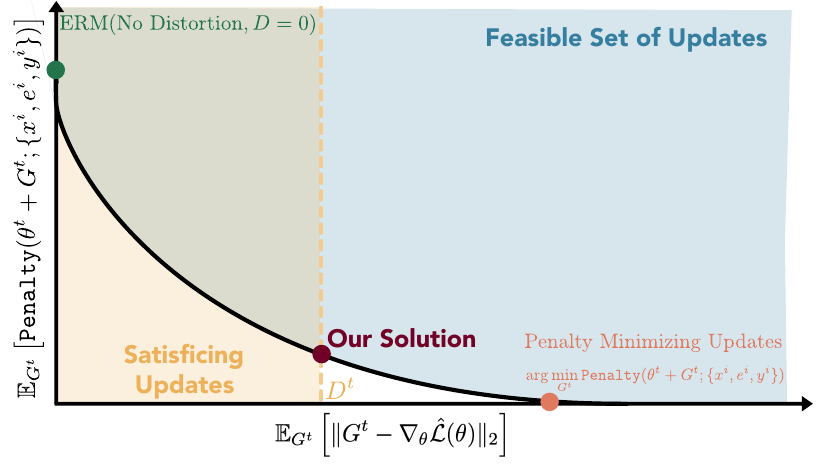}
    \caption{The penalty-distortion function $R(D)$ is a non-increasing and convex function of $D$ that describes the set of feasible updates. Our analysis of SGD with bias implies that the \mbox{$\Ee[\|G^t - \nabla \rR_e(\theta^t)\|_2] \leq D^t$} describes the set of updates that eventually minimize the empirical risk. The penalty minimizing updates can lie well beyond the region that solves ERM. Our formulation chooses the update that minimize the penalty while eventually solving the ERM problem.}
    \label{fig:info_plane}
\end{figure}

In this section, we provide observations on the proposed formulation (\ref{eq:main_opt_formal}) by utilizing tools from rate-distortion theory.
Consider the function $R(D)$ as the solution of (\ref{eq:main_opt_formal}) for a given constant factor $D$. The problem (\ref{eq:main_opt_formal}) is simply minimizing the penalty with the constraint of the limited distortion in updates. This form is strikingly similar to the rate-distortion function, where the (compression) rate is optimized with the constraint of acceptable degradation in decoding. If the penalty were the mutual information between the true gradient and $G^t$, it would have been exactly equivalent. Hence, we can further utilize tools from rate-distortion theory to better understand (\ref{eq:main_opt_formal}). We now state a few simple facts derived from rate-distortion theory. (\emph{Proofs follow their rate-distortion counterparts directly and are provided in Appendix~\ref{app_sec:rd_proofs}.})

\noindent \textbf{Extreme cases:} When $D^t=\nicefrac{D}{\sqrt{t}}=0$, $E[G^t] = \nabla \lL(\theta^t)$. Hence, at the end of training, we perform ERM updates. When the distortion is unbounded ($D^t=\infty$), updates directly minimize the penalty. Moreover, our proposed formulation smoothly goes from updates minimizing penalty to ERM updates during training.

\noindent \textbf{Improvement guarantee:} $R(D)$ is a non-increasing and convex function of $D$. Hence, $R(0) \geq R(D^t)$. In other words, SDG decreases the penalty when compared with ERM.

\noindent \textbf{Geometric picture:} The satisficing requirement implies a certain structure over the plane where the $x$-axis is the distortion and the $y$-axis is the penalty. We visualize this geometry and the interpretation of our method in Figure~\ref{fig:info_plane}.

\subsection{Solving Satisficing Domain Generalization}
\label{sec:sdg}
We present a numerical algorithm to solve (\ref{eq:main_opt_formal}). We first give an iterative method similar to Blahut-Arimoto (BA) \citep{Blahut72,Arimoto72} with intractable complexity. We later make simplifying assumptions to design a tractable method.

We propose to regularize the objective with the mutual information between gradients ($G^t$) and the domain-ids ($E$). This regularization is sensible as we would like gradients to carry little information about the domain-ids. The final numerical optimization problem we tackle is
\begin{equation}
\begin{aligned}
\min_{p(G^t_i)} & \quad \Ee_{G^t}\left[\texttt{Penalty}(\theta^t+G^t;\{x^i,e^i,y^i\})\right] + \gamma \mathcal{I}(G^t;E) \\
&st. \quad \Ee_{G^t}\left[\|G^t - \nabla_{\theta}\lL(\theta)\|_2 \right] \leq \frac{D}{\sqrt{t}}
\end{aligned}
\label{eq:entropy_reg}
\end{equation}

The information regularization is not only desired but also crucial as it enables us to apply technical derivation similar to Blahut-Arimoto \citep{Blahut72,Arimoto72}. 

To solve (\ref{eq:entropy_reg}), We only consider the discrete and finite probability mass functions where $G^t \in \{G^t_1,\ldots,G^t_K\}$;  hence, we only need to solve for the point masses $p_{G^t_i}=p(G^t=G^t_i)$. We first define auxiliary variables $p_{G^t_i|e}=p(G^t=G^t_i|E=e)$, which are conditioned on domain-ids. By this decomposition, we can solve for each domain conditional probability $p_{G^t_i|e}$ separately and later marginalize them via $p_{G^t_i} = \nicefrac{1}{M}\sum_{e} p_{G^t_i|e}$. We further replace $\Ee_{G^t}[\|G^t - \nabla_{\theta}\hat{\lL}(\theta)\|_2] \leq \nicefrac{D}{\sqrt{t}}$ with a stronger constraint of uniform bounds over domains as $\Ee_{G^t|E=e}[\|G^t - \nabla_{\theta}\hat{\lL}^e(\theta)\|_2 ] \leq \nicefrac{D}{\sqrt{t}}$ and apply a similar derivation as Blahut~\citep{Blahut72} and Arimoto~\citep{Arimoto72}. We defer the complete derivation to the Appendix~\ref{sec:ba_derivation} and state the update steps directly.

Consider the notation \mbox{$d(G,e)=\|G-\nabla\lL^e(\theta)\|_2$}, and \mbox{$\texttt{Pen}(G)=\texttt{Penalty}(\theta^t+G;\{x^i,e^i,y^i\})$}. We initialize the conditional probabilities uniformly as \mbox{$p_{G^t_i|e}=\nicefrac{1}{K}$} and apply the iterations:

\begin{itemize}
    \item Domain Specific Gradients: $\hat{p}^{l+1}_{G^t_k|e} =  p^{l}_{G^t_k} \exp\left[-\frac{1}{\gamma}\left(\texttt{Pen}(G^t_k) + \beta  d(G^t_k,e) \right)\right]$
    \item Normalization: $p^{l+1}_{G^t_k|e} = \frac{\hat{p}^{l+1}_{G^t_k|e}}{\sum_{\hat{k}}\hat{p}^{l+1}_{G^t_{\hat{k}}|e}}$
    \item Marginalization: $p^{l+1}_{G^t_k} = \nicefrac{1}{M}\sum_{e} p^{l+1}_{G^t_k|e}$
\end{itemize}
Here $\beta^t$ is a parameter that inversely depends on $D^t$. These iterations converge to a solution where all the conditional distributions are equal.

Applying these updates to the space of gradients is intractable since the space of $G^t$ is high-dimensional. We propose two simplifications to make the application of BA tractable: i)~we solve for each parameter independently, and ii)~we only estimate the sign and copy the magnitude from the ERM. Specifically, we solve the BA problem using the discrete set $(G^t)_p \in \{(G^t_1)_p = (\nabla \lL)_p$, $(G^t_2)_p = -(\nabla \lL)_p \}$ for $(G^t)_p$, which is the update for the $p^{th}$ parameter. We use uniform initialization: $p^0((G^t_1)_p)=p^0((G^t_2)_p)=0.5$. We present a more detailed discussion in Appendix~\ref{sec:ba_derivation}\&\ref{sec:app_coral_sdg} with complete derivation and give pseudocode in Algorithm~\ref{alg:coral_sdg}. After the BA iterations are completed, we sample the update $G^t \sim p(G^t)$ and apply it to the model.

To understand the proposed simplifications, we consider their implications for the rest of the pipeline. We feed the resulting estimated gradients to first-order numerical optimizers. Moreover, we utilize the first-order approximation of the penalty function in our algorithms in Appendix~\ref{sec:ba_derivation}\&\ref{sec:app_coral_sdg}. Hence, the rest of the pipeline is first order, not utilizing higher-order relationships between coordinates, justifying the first simplification. The second simplification is similar to SignSGD \citep{SignSGD} like methods, where only the sign of the gradient is used without its magnitude for efficient communication between worker nodes in multi-node optimization. Sign of the gradient suffices for convergence in non-convex problems both theoretically and empirically.

\section{Experimental Results}
\label{sec:exp}
 \subsection{Implementation details.} 
We implement the proposed algorithm in PyTorch. Following the analysis in Proposition~\ref{prop:satisficing_gd}, $D^t = D\sqrt{\nicefrac{1}{t}}$. Moreover, $\beta$ is inversely related to $D$. We choose $\beta^t =  (\nicefrac{\beta^\circ}{\sqrt{T}})\sqrt{t}$ for epoch $t$ from $t=1$ to $t=T$. We treat $\beta^\circ$ as a hyperparameter and search in $\beta^\circ \in \{0.01,0.1,1.0\}$ together with all other hyperparameters. Instead of treating $\gamma$ (weight of the information regularization) as a hyper-parameter, we present a simple heuristic to set it. Specifically, since the relevant term in the iterative update is $\exp\left[-\frac{1}{\gamma}\left(\texttt{InvP}(G^t_k) + \beta  d(G^t_k,e) \right)\right]$, we keep track of the average of $\left(\texttt{InvP}(G^t_k) + \beta  d(G^t_k,e) \right)$ throughout training and set it as $\gamma$ at each instant. In all of our experiments, we used $25$ BA iterations. Since using all domains at each batch is not feasible, we first sample $B_D$ domains, then sample $B$ examples per domain (named as group sampler in WILDS\citep{Wilds}). We search all hyperparameters with random sampling with the constraint that all methods have the same budget for the hyperparameter search.  Specifically, we use 20 random hyperparameter choices. We also perform early stopping and choose the best epoch using validation domain performance. Moreover, to ensure a fair comparison, all methods are run for the same amount of wall-clock time. Since our method is slower, we perform fewer epochs than other methods. We list all chosen hyperparameters in Appendix~\ref{sec:appendix_hparams}.

\subsection{Evaluation on WILDS}
\paragraph{Datasets}
Our main evaluation is using the WILDS benchmark \citep{Wilds}. WILDS is designed to evaluate domain generalization and subpopulation shift in realistic problems. Among these problems, we use seven problems that are either pure domain generalization problems or a combination of domain generalization and subpopulation shift. These problems cover a wide range, including outdoor images (iWildCam), medical images (Camelyon), satellite images (FMoW, PovertyMap), natural language (Amazon), formal language (py150), and graph-structured data (OGB-MolPCBA) with the number of domains from $5$ to $120K$. The summary of the benchmarks is in Appendix~\ref{app_sec:datasets}.

\paragraph{Baselines}
We apply our proposed optimizer (SDG) to CORAL \citep{CORAL} and denote CORAL+SDG. We compare CORAL+SDG with i) CORAL: the standard implementation of CORAL which is jointly minimizing the loss and the penalty, ii) CORAL+AND-MASK: Applying the AND-MASK\citep{ANDMask} to CORAL minimization, iii) CORAL+SAND-MASK: Applying the SAND-MASK\citep{SANDMask} to CORAL minimization. It is important to note that AND-MASK\citep{ANDMask} and SAND-MASK\citep{SANDMask} have not been applied in conjunction with CORAL before. Still, we compare with them since they also directly modify the optimization procedure. We compute the gradient of the empirical risk plus coral penalty for each domain separately and utilize their \emph{and-mask} and \emph{sand-mask} as corresponding updates. 

\subsubsection{Main Results}

\begin{table*}[t]
\caption{\textbf{Out-of-distribution test performance in the WILDS benchmarks} \citep{Wilds}. We report average over 3 seeds for OGB-MolPCBA, FMoW, Amazon, and py150, 5 folds for PovertyMap, and 10 seeds for Camelyon17. We report standard deviations as uncertainty.}
\label{tab:wilds_main}
\begin{sc}
\resizebox{\textwidth}{!}{%
{\small
\begin{tabular}{@{}l@{\hspace{2mm}}S[
        table-format=2.1,
		separate-uncertainty = true,
		table-figures-uncertainty = 1
	]@{\hspace{2mm}}S[
        table-format=2.1,
		separate-uncertainty = true,
		table-figures-uncertainty = 1
	]@{\hspace{2mm}}S[
        table-format=2.1,
		separate-uncertainty = true,
		table-figures-uncertainty = 1
	]@{\hspace{2mm}}S[
        table-format=2.1,
		separate-uncertainty = true,
		table-figures-uncertainty = 1
	]@{\hspace{2mm}}S[
        table-format=2.1,
		separate-uncertainty = true,
		table-figures-uncertainty = 1
	]@{\hspace{2mm}}S[
        table-format=2.1,
		separate-uncertainty = true,
		table-figures-uncertainty = 1
	]@{\hspace{2mm}}S[
        table-format=2.1,
		separate-uncertainty = true,
		table-figures-uncertainty = 1
	]@{}}
\toprule
& {iWildCam} &{Camelyon17} & {OgbMolPcba} & {FMoW} & {Amazon} & {Py150} & {PovertyMap} \\
\cmidrule(lr){2-2} \cmidrule(lr){3-3} \cmidrule(lr){4-4} \cmidrule(lr){5-5}
\cmidrule(lr){6-6} \cmidrule(lr){7-7} \cmidrule(lr){8-8}
& {Macro-F1} & {Avg Acc} & {AP} & {WorstR Acc} & {$10^{th}$-tile Acc} & {MethodClass} & {WstPearsonR} \\
\midrule
CORAL             & 32.8 \pm 0.1  & 59.5 \pm 7.7 & 17.9 \pm 0.5 & 31.0 \pm 0.4 & 52.9 \pm 0.8 & 65.9 \pm 0.1 & 0.44 \pm 0.06\\ 
\quad+AND         & 8.2  \pm 0.5  & 70.8  \pm 6.7 & 10.3  \pm 0.1 & 28.0  \pm 1.0 & 51.5  \pm 0.7 & 66.7  \pm 0.6 & 0.44  \pm 0.03 \\ 
\quad+SAND        &  \bfseries 33.9  \pm 1.4  & 64.7  \pm 4.2 & 23.4  \pm 0.7 & \bfseries 33.3  \pm 0.2 & 53.1  \pm 0.5 & 66.9  \pm 0.2 & 0.44  \pm 0.08\\ 
\quad+SDG         & \bfseries 34.6 \pm 1.2  & \bfseries 75.2 \pm 4.4 &  25.8 \pm 0.1 & \bfseries 33.9 \pm 0.9 & \bfseries 53.7  \pm 0.6 & \bfseries 68.3  \pm  0.1 & \bfseries 0.49  \pm 0.05\\ 
\midrule
FISH               & 22.0 \pm 1.8  & \bfseries 74.7 \pm 7.1 &  22.6 \pm 0.3 & \bfseries 34.6 \pm 0.2 &  53.3 \pm 0.8 & 65.6 \pm 0.1 & 0.35 \pm 0.03 \\ 
\quad+SDG         & 32.9 \pm 0.2  & \bfseries 76.2 \pm 8.4 & 26.1 \pm 0.5 & \bfseries 34.6 \pm 0.3 &  53.5  \pm 0.6 & 66.5  \pm 0.2 & 0.46  \pm 0.03 \\
\midrule
ERM               & 31.0 \pm 1.3  & 70.3 \pm 6.4 & \bfseries 27.2 \pm 0.3 & 32.8 \pm 0.5 & \bfseries 53.8 \pm 0.8 & 67.9 \pm 0.1 & 0.45 \pm 0.06\\  \bottomrule
\end{tabular}
}}
\end{sc}
\end{table*}

\begin{table*}[bt]
\caption{\textbf{In-distribution test performance in the WILDS benchmarks} \citep{Wilds}. We report average over 3 seeds for OGB-MolPCBA, FMoW, Amazon, and py150, 5 folds for PovertyMap, and 10 seeds for Camelyon17. We report standard deviations as uncertainty.}
\label{tab:wilds_in}
\begin{sc}
\resizebox{\textwidth}{!}{%
{\small
\begin{tabular}{@{}l@{\hspace{2mm}}S[
        table-format=2.1,
		separate-uncertainty = true,
		table-figures-uncertainty = 1
	]@{\hspace{2mm}}S[
        table-format=2.1,
		separate-uncertainty = true,
		table-figures-uncertainty = 1
	]@{\hspace{2mm}}S[
        table-format=2.1,
		separate-uncertainty = true,
		table-figures-uncertainty = 1
	]@{\hspace{2mm}}S[
        table-format=2.1,
		separate-uncertainty = true,
		table-figures-uncertainty = 1
	]@{\hspace{2mm}}S[
        table-format=2.1,
		separate-uncertainty = true,
		table-figures-uncertainty = 1
	]@{\hspace{2mm}}S[
        table-format=2.1,
		separate-uncertainty = true,
		table-figures-uncertainty = 1
	]@{\hspace{2mm}}S[
        table-format=2.1,
		separate-uncertainty = true,
		table-figures-uncertainty = 1
	]@{}}
\toprule
& {iWildCam} & {Camelyon17} & {OgbMolPcba} & {FMoW} & {Amazon} & {Py150} & {PovertyMap} \\
\cmidrule(lr){2-2} \cmidrule(lr){3-3} \cmidrule(lr){4-4} \cmidrule(lr){5-5}
\cmidrule(lr){6-6} \cmidrule(lr){7-7} \cmidrule(lr){8-8}
& \multicolumn{1}{c}{Macro-F1} & \multicolumn{1}{c}{Avg Acc} & \multicolumn{1}{c}{AP} & \multicolumn{1}{c}{WorstR Acc} & \multicolumn{1}{c}{$10^{th}$-tile Acc} & \multicolumn{1}{c}{MethodClass} & \multicolumn{1}{c}{WstPearsonR} \\
\midrule
CORAL             & 43.5 \pm 3.5  & 95.4 \pm 3.6 &  18.4 \pm 0.2 & 55.0 \pm 1.0 & 55.1 \pm 0.4 & 70.6 \pm 0.1 & 0.59 \pm 0.03\\ 
\quad+AND         & 11.1  \pm 0.7  &  96.6  \pm 0.1 & 10.6  \pm 0.1 & 48.9  \pm 0.2 & 53.8  \pm 0.5 & 73.4  \pm 0.2 & 0.57  \pm 0.01 \\ 
\quad+SAND        &  47.8  \pm 0.7  & \bfseries 97.9  \pm 0.2 & 28.0  \pm 0.2 & 55.4 \pm 0.2 & 56.3  \pm 0.5 & 74.2  \pm 0.9 & 0.59  \pm 0.02\\ 
\quad+SDG         &  \bfseries 48.1 \pm 1.0  &  95.5 \pm 3.8 &  \bfseries 28.3 \pm 0.1 &  56.0 \pm 3.1 &  56.6  \pm 0.5 &  \bfseries 75.4  \pm  0.1 &  \bfseries 0.60  \pm 0.02\\ 
\midrule
ERM               & 47.0 \pm 1.4  & 93.2 \pm 5.2 &  27.8 \pm 0.1 & \bfseries 58.3 \pm 0.9 & \bfseries 57.3 \pm 0.1 & \bfseries 75.4 \pm 0.4 & 0.57 \pm 0.07\\ \bottomrule
\end{tabular}
}}
\end{sc}
\end{table*}

We summarize the out-of-distribution results in Table~\ref{tab:wilds_main} and in-distribution results in Table~\ref{tab:wilds_in}. We also compute the generalization gap (difference between in-distribution and out-of-distribution performance) and tabulate it in Table~\ref{tab:wilds_gen}. When evaluated on out-of-distribution performance, our method improves CORAL in all settings. The improvement is significant for Camelyon17 with more than 15\% and OGB-MolPCBA with more than 7\%. Moreover, although the CORAL outperforms ERM with a standard optimizer only for 1 out 7 benchmarks, combined with our optimizer, CORAL outperforms ERM for 6 out of 7 benchmarks. Our method closes the gap for Camelyon17, FMoW, Amazon, Py150, and PovertyMap. The only setting CORAL fails to outperform ERM when combined with our optimizer is OGBMolPCBA. We believe one reason for this is the additional computational complexity of our method. For a fair comparison, our method performs significantly fewer epochs than ERM. When compared with AND-mask and SAND-mask, our method outperforms them in all benchmarks. 

\begin{wraptable}{r}{8.5cm}
\vspace{-4mm}
\caption{Generalization Gap (difference between in-distribution and out-distribution performance) in the WILDS benchmarks \citep{Wilds}.
We denote the results which generalize better than ERM with {\color{ForestGreen}Green} color and worse than ERM with {\color{red}Red} color.}
\label{tab:wilds_gen}
\begin{sc}
\resizebox{8.5cm}{!}{%
{\small
\begin{tabular}{@{\hspace{2mm}}l@{\hspace{2mm}}c@{\hspace{2mm}}c@{\hspace{2mm}}c@{\hspace{2mm}}c@{\hspace{2mm}}c@{\hspace{2mm}}c@{\hspace{2mm}}c@{}}
\toprule
& iWildC. & Cam & Mol & FMoW & Amazon & Py150 & Pov.Map \\
\midrule
CORAL             & \textcolor{ForestGreen}{10.7}   & \textcolor{red}{35.9}   &  \textcolor{ForestGreen}{0.5} & \textcolor{ForestGreen}{24}   & \textcolor{ForestGreen}{2.2} & \textcolor{ForestGreen}{4.7} & \textcolor{red}{0.15} \\ 
\quad+AND         & \textcolor{ForestGreen}{2.9 }   & \textcolor{red}{25.8}   &  \textcolor{ForestGreen}{0.3} & \textcolor{ForestGreen}{20.9} & \textcolor{ForestGreen}{2.3} & \textcolor{ForestGreen}{6.7} & \textcolor{red}{0.13} \\ 
\quad+SAND        & \textcolor{ForestGreen}{13.9}   & \textcolor{red}{33.2}   &  \textcolor{red}{4.6}   & \textcolor{ForestGreen}{22.1} & \textcolor{ForestGreen}{3.2} & \textcolor{ForestGreen}{7.3} & \textcolor{red}{0.15}\\ 
\quad+SDG         & \textcolor{ForestGreen}{13.5}   & \textcolor{ForestGreen}{20.3} &  \textcolor{red}{2.5}   & \textcolor{ForestGreen}{22.1} &  \textcolor{ForestGreen}{2.9} &  \textcolor{ForestGreen}{7.1} &  \textcolor{ForestGreen}{0.11}\\ 
\midrule
ERM               & 16  & 22.9 &  0.6 & 25.5 & 3.5 & 7.5 &  0.12 \\ \bottomrule
\end{tabular}}}
\end{sc}
\vspace{-2mm}
\end{wraptable}%
When evaluated for in-distribution performance in Table~\ref{tab:wilds_in}, our method improves upon CORAL in all benchmarks. This is rather expected as our approach explicitly ensures the optimality of empirical risk. When the generalization gap (difference between in-distribution and out-of-distribution) is evaluated in Table~\ref{tab:wilds_gen}, our method has a similar generalization gap with CORAL in average (our method has a smaller gap for 3 benchmarks, similar gap within the standard deviation for 2, and a larger gap for other 2). Hence, our proposed optimizer improves empirical risk without hurting the generalization ability on 5 out of 7 datasets. Even when our method hurts generalization, the improvement on the empirical risk is so large the resulting out-of-distribution performance improves.

\subsection{Additional Empirical Analysis}
In addition to our main study of applying our method to CORAL in WILDS benchmark, here we test the generality of our method for penalty function choice (\ie generality beyond CORAL) as well as application (\ie generality beyond WILDS).

\paragraph{Applicability beyond CORAL}
We test generality of our method beyond CORAL. Specifically, we apply our method to FISH~\citep{FISH} model in WILDS benchmark~\citep{Wilds} and report in Table~\ref{tab:wilds_main}. Application to FISH~\citep{FISH} requires modifications as it is not directly penalty based and we explain them in Appendix~\ref{app_sec:fish}. Results on FISH~\citep{FISH} suggest that our method improves FISH on 5 out of 7 benchmarks. Moreover, the improvement for iWildCam and PovertyMap is significant, more than 9\%.

\paragraph{Applicability beyond WILDS}
We test generality of our method beyond WILDS benchmark. Specifically, we apply our method to FISH~\citep{FISH}, VRex~\citep{VREX}, and CORAL~\citep{CORAL} model in DomainBed~\citep{DomainBed} benchmark. We directly use the benchmark code shared by DomainBed \citep{DomainBed} and follow the official process. We tabulate the results in Table~\ref{tab:vrex}. The results suggest that our method consistently improves the performance of all tested methods. 

Consistent improvement for 3 different domain generalization models on 14 different benchmarks over 2 experimental suites conclusively demonstrates the generality of the proposed method.

\begin{table}[h!]
\centering
\caption{\textbf{Out-of-distribution test performance of VRex} \citep{VREX} \textbf{in the DomainBed benchmarks} \citep{DomainBed}. We average over 3 seeds and report standard deviations as uncertainty.}
\label{tab:vrex}
\vspace{2mm}
{\small
\begin{tabular}{@{}l@{\hspace{2mm}}S[
        table-format=2.1,
		separate-uncertainty = true,
		table-figures-uncertainty = 1
	]@{\hspace{2mm}}S[
        table-format=2.1,
		separate-uncertainty = true,
		table-figures-uncertainty = 1
	]@{\hspace{2mm}}S[
        table-format=2.1,
		separate-uncertainty = true,
		table-figures-uncertainty = 1
	]@{\hspace{2mm}}S[
        table-format=2.1,
		separate-uncertainty = true,
		table-figures-uncertainty = 1
	]@{\hspace{2mm}}S[
        table-format=2.1,
		separate-uncertainty = true,
		table-figures-uncertainty = 1
	]@{\hspace{2mm}}S[
        table-format=2.1,
		separate-uncertainty = true,
		table-figures-uncertainty = 1
	]@{\hspace{2mm}}S[
        table-format=2.1,
		separate-uncertainty = true,
		table-figures-uncertainty = 1
	]@{\hspace{2mm}}S[
        table-format=2.1
	]@{}}
\toprule
{Algorithm}        & {C-MNIST}     & {R-MNIST}     & {VLCS}             & {PACS}             & {OfficeHome}       & {TerraIncog.}   & {DomainNet}        & {Avg}              \\
\midrule
FISH & 51.6 \pm 0.1 &  \multicolumn{1}{l}{$98.0 \pm 0.0$} & 77.8 \pm 0.3 & 85.5 \pm 0.3 & 68.6 \pm 0.4 & 45.1 \pm 1.3 & 42.7 \pm 0.2 & 67.1 \\
\quad+SDG & 51.9 \pm 0.1 &  \multicolumn{1}{l}{$98.0 \pm 0.0$} & \bfseries 79.9 \pm 0.5 & \bfseries 87.3 \pm 0.3 & 68.4 \pm 0.5 & 48.4 \pm 0.8 & 42.6 \pm 0.4 & 68.1 \\
\midrule
CORAL & 51.5 \pm 0.1 &  98.0 \pm 0.1 &  78.8 \pm 0.6 & 86.2 \pm 0.3 & \bfseries 68.7 \pm 0.3 & 47.6 \pm 1.0 & 41.5 \pm 0.1 & 67.5 \\
\quad+SDG & 52.5 \pm 0.2 & \bfseries 98.2 \pm 0.1 & \bfseries 79.7 \pm 0.3 & 86.9 \pm 0.4 & \bfseries 68.7 \pm 0.7 & \bfseries 49.9 \pm 1.2 & 41.4 \pm 0.4  & \bfseries 68.2 \\
\midrule
VREx & 51.8 \pm 0.1 &  97.9 \pm 0.1 &  78.3 \pm 0.2 & 84.9 \pm 0.6 & 66.4 \pm 0.6 & 46.4 \pm 0.6 & 33.6 \pm 2.9 & 65.6 \\
\quad+SDG                      & \bfseries 52.7 \pm 0.1            & \bfseries 98.1 \pm 0.2            &  78.9 \pm 0.5            &  86.4 \pm 0.3            &  68.2 \pm 0.5            &  48.1 \pm 0.8            & \bfseries 41.8 \pm 0.4            &  67.7                       \\
\midrule
ERM & 51.5 \pm 0.1 &  \multicolumn{1}{l}{$98.0 \pm 0.0$} &  77.5 \pm 0.4 & 85.5 \pm 0.2 & 66.5 \pm 0.3 & 46.1 \pm 1.8 & 40.9 \pm 0.1 & 66.6 \\
\bottomrule
\end{tabular}}
\end{table}

\clearpage
\subsubsection{Impact on training time.}
\label{sec:exp_limit}
\begin{wraptable}{r}{6cm}
\vspace{-5mm}
\centering
    \caption{Wall-clock time (in minutes) for training a single epoch for CORAL and CORAL+SDG method on an RTX 3090 GPU. Since our method needs
    gradients per-domain, it needs number of domains in the batch times backward pass during training. This
    is a limitation of our method.}
    \label{tab:training_time}
    \resizebox{6cm}{!}{%
    \begin{tabular}{@{}lS[
        table-format=3.0,
		separate-uncertainty = true,
		table-figures-uncertainty = 1
	]S[
        table-format=3.0,
		separate-uncertainty = true,
		table-figures-uncertainty = 1
	]@{}}
        \toprule
                    & {CORAL} & {CORAL+SDG} \\ \midrule
        iWildCam    & 57    & 91 \\
        Camelyon17  & 19    & 72 \\
        OGB-MolPCBA & 22    & 73 \\
        FMoW        & 9     & 41 \\
        Amazon      & 100   & 227 \\
        Py150       & 186   & 420 \\
        PovertyMap  & 4     &  9 \\ \bottomrule
    \end{tabular}}
\end{wraptable} %
Although our method effectively learns generalizable models, it has a rather significant impact on computational complexity. Consider an input batch of $B\times B_D$ datapoints over $B_D$ domains. Direct optimization only needs the gradient for the average risk, while our method requires a gradient for each domain. Hence, our method needs $B_D$ backward passes, whereas direct optimization needs only a single pass. Effectively, our training time is up to $B_D$ times slower.  It is important to note that our method has very limited impact on training time when the original algorithm computes gradient per domain. For example, utilizing our method with IRM would result in small change in training time.

To empirically evaluate the computational impact, we compute the average time per epoch per dataset and report it for our method and ERM in Table~\ref{tab:training_time}. Our method increases the training time significantly. Although this limitation can be easied using some approximation techniques \citep{MGDA_UB}, we leave the potential computational improvements as future work. Moreover, it is important to clarify that we decrease the number of epochs for our method to ensure all numbers are fair in terms of used compute.

\section{Related Work}

\textbf{Domain generalization.}
Domain generalization addresses settings where the training and test data come from different distributions, violating the common independent and identically distributed (iid.) assumption. The training data comes from multiple distinct data distributions (called domains). The typical assumption to tackle domain generalization is covariate shift \citep{covariate_shift}, where only the data distribution changes between domains and the conditional distribution of label given data stays the same. We are specifically interested in penalty/invariance based domain domain generalization. However, the approaches for domain generalization are more general and include domain conditioning \citep{domain_conditioning}, self-supervised pre-training \citep{pre_training}, domain-specific decomposition \citep{domain_decomp}, and data augmentation \citep{volpi}. We treat these works as orthogonal to us. One approach for invariance based domain generalization is \emph{distributionally robust optimization}. Here a robust form of risk is optimized instead of the average risk \citep{GroupDRO, AdversarialDRO, DRO}.  The robustness is typically against a predefined amount of distributional shift around the training data, hoping that it covers the test distribution. Similarly, Krueger et al.~\citep{VREX} enforce uniformity of risk functions to guarantee robustness following a causality-inspired approach. Another common approach is \emph{adversarial feature learning}. Here a representation learning problem is posed such that an adversary cannot detect the domain information from the learned features. Adversarial feature learning is achieved via game-theoretic min-max formulations \citep{GaninJMLR, CDANN}, distributional distances such as MMD \citep{MMDDG}, and divergence of second-order statistics of features \citep{CORAL}. \emph{Causality} inspired methods is another thrust where the problem is posed in terms of finding features following the causal direction. The out-of-distribution generalization problem considered in causality-based methods addresses a larger class of problems beyond domain generalization since they are applicable beyond covariate shift. An early application of causality to out-of-distribution generalization was for linear models \citep{LinearSEM}, later extended to the non-linear case \citep{IRM}. Final category which we discuss is \emph{gradient alignment}. Here gradient-based learning is directly addressed and algorithms to align gradients of risk functions from different domains are developed. This class of methods is the most similar to our case. Koyama \& Yamaguchi \citep{koyama2021invariance} enforce gradient similarity by minimizing the variance of the gradients and Shi et al.~\cite{FISH} minimize the cosine distance between gradients. Similarly, \citep{ARM} finds domain transferrable solutions using meta-learning. Parascandolo et al.~\citep{ANDMask} combine these ideas with causality-based motivation and propose a method that directly manipulates gradients. Among these methods, ANDMask \citep{ANDMask} and SANDMask \citep{SANDMask} are the most relevant to us since they are the only ones that explicitly modify gradients and use sign information.

\textbf{Rate-distortion approaches.}
Rate-distortion theory addresses the problem of lossy compression within information theory \citep{Cover}. Among approaches that are numerical and directly applicable to any distribution, the Blahut-Arimoto algorithm is a common choice \citep{Blahut72,Arimoto72}. Although there are improved variants \citep{Vontobel08, Yu2010, Matz04}, we find the original algorithm simple and effective. To the best of our knowledge, our work is the first application of the Blahut-Arimoto algorithm to the problem of domain generalization. The only application of Blahut-Arimoto within machine learning that we are aware of is the BLAST algorithm \citep{BLAST}, which uses the information-theoretic analysis of Thompson sampling \citep{Russo2018} for better sample efficiency. Ideas from rate-distortion theory have recently been applied to the problem of learning invariant representations by Zhao et al.~\citep{Tradeoff} to understand the underlying tradeoffs of transfer learning theoretically. Different from \citep{Tradeoff}, we analyze convergence properties in addition to the information-theoretic motivation. We also propose a practical method for transfer learning. A closely related problem to rate-distortion is the information bottleneck \citep{Tishby00}, where the mutual information between representation and input data is minimized while keeping the mutual information between representation and label maximal. Although these methods have partially been applied to domain adaptation and generalization setting \citep{Du2010, Song20, Dubois20,Zhao2010}, they typically require stochastic neural networks and additional modeling assumptions; hence, they are not directly applicable to our case. To the best of our knowledge, these methods have not yet been applied to large-scale problems on the scale that we are interested in. Our approach is significantly different from these approaches as our method stems from an analysis of gradient descent and an explicit formulation of satisficing gradient descent.

\section{Conclusion}
We considered penalty-based domain generalization and reformulated it to find updates minimizing the surrogate penalty under the constraint that applying these updates eventually converges to the stationary point of the empirical risk. We developed an optimizer that is applicable to existing domain generalization models. Our extensive empirical analysis on the WILDS and DomainBed benchmark validates the efficacy of our method for CORAL, FISH and VRex models. 

A major drawback of our method is the additional computational complexity as it needs gradients per domain. A possible future direction is applying approximation techniques to reduce this complexity. Moreover, we provided an exciting connection between rate-distortion theory and domain generalization. Another interesting future direction is utilizing ideas from the rate-distortion theory literature to better understand and improve domain generalization.

\bibliographystyle{abbrv}
\bibliography{example_paper}

\clearpage
\appendix

\section{Proof of Proposition \ref{prop:satisficing_gd}}
\label{sec:appendix_proof}
Consider applying the stochastic updates $\theta^{t+1} = \theta^t - \eta G^t$ to a function $\rR(\theta)$. Using the $\mu$-smoothness of the function $\rR$,

\begin{equation}
    \rR({\theta}^{t+1}) = \rR({\theta}^t - \eta G^t) \leq \rR(\theta^t) - \eta \nabla \rR(\theta^t)^\intercal G^t + \frac{\mu \eta^2}{2}\|G^t\|^2_2.
\end{equation}

Taking the expectation of the inequality and using the fact that $\Ee[\|G^t\|_2^2] \leq V$,
\begin{equation}
\Ee[\rR({\theta}^{t+1})] \leq \Ee[\rR({\theta}^{t})] - \eta  \Ee[\|\nabla \rR({\theta}^t)\|_2^2] -\eta \Ee[ \nabla \rR({\theta}^t)^\intercal \left({G}^t - \nabla \rR({\theta}^t)\right)] + \frac{\mu \eta^2 {V}}{2}.
\end{equation}

Using the Lipschitz smoothness of the $\rR$ and re-ordering the terms,
\begin{equation}
    \eta \Ee[\|\nabla \rR({\theta}^t)\|_2^2] \leq \Ee[\rR({\theta}^{t})] - \Ee[\rR({\theta}^{t+1})]  + \frac{\mu \eta^2 V}{2} + \eta L \left\|\Ee[{G}^t] - \nabla \rR({\theta}^t)\right\|_2.
\end{equation}

Summing up from $t=1$ to $T$ and dividing by $\eta$, we obtain
\begin{equation}
    \sum_{t=1}^T \Ee[\|\nabla \rR({\theta}^t)\|_2^2] \leq \frac{2 \Delta}{\eta} + \frac{\mu\eta T{V}}{2} + L \sum_{t=1}^T \| \Ee[{G}^t] - \nabla \rR({\theta}^t)\|_2.
\end{equation}

Using the fact that the distortions are bounded as $D^t = \| \Ee[G^t] - \nabla \rR({\theta}^t)\|_2 \leq \nicefrac{D}{\sqrt{t}}$, we can bound the sum $\sum_{t=1}^T D^t$ with $2 D \sqrt{T}$. By solving for the optimal $\eta$ as;
\begin{equation}
\eta = 2\sqrt{\frac{\Delta}{\mu T {V}}}.
\end{equation}

The final bound becomes,
\begin{equation}
    \sum_{t=1}^T \Ee[\|\nabla \rR({\theta}^t)\|_2^2] \leq \frac{2\Delta}{\eta} + \frac{\mu\eta T{V}}{2} + L \sum_{t=1}^T \| \Ee[{G}^t] - \nabla \rR({\theta}^t)\|_2.
\end{equation}

We plug the resulting step size in the bound and divide it with $T$, to obtain,

\begin{equation}
    \frac{1}{T}\sum_{t=1}^T \Ee[\|\nabla \rR({\theta}^t)\|_2^2] \leq 2\left( \sqrt{\Delta \mu {V}} + LD \right) \sqrt{\frac{1}{T}}
\end{equation}

\section{Proofs of Statements from Section~\ref{sec:rate_distortion}}
\label{app_sec:rd_proofs}
In this section, we prove the facts that we state in Section~\ref{sec:rate_distortion} without proof. These proofs straightforwardly follow their rate-distortion version from \citep{Cover}. 

We are interested in understanding the following function;
\begin{equation}
\begin{aligned}
R(D) \triangleq \min_{p(G^t)} & \quad \Ee_{G^t}\left[\texttt{Penalty}(\theta^t+G^t;\{x^i,e^i,y^i\})\right] \\
&st. \quad \Ee_{G^t}\left[\|G^t - \nabla_{\theta}\hat{\lL}(\theta)\|_2 \right] \leq \frac{D}{\sqrt{t}}
\end{aligned}
\label{eq:formal_basic}
\end{equation}

We state the formal versions of the informal statements in Section~\ref{sec:rate_distortion} and their proofs as follows;

\begin{proposition}[Extreme Cases]
i) When $D^t=\nicefrac{D}{\sqrt{t}}=0$, $E[G^t] = \nabla \lL(\theta^t)$. ii) When the distortion is unbounded ($D^t=D=\infty$), updates follow a distribution with the minimum penalty in expectation.
\end{proposition}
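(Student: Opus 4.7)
The plan is to read both assertions directly off the definition (\ref{eq:formal_basic}) of $R(D)$: part (i) follows by exploiting that the constraint is a bound on the expectation of a nonnegative quantity, and part (ii) follows because an infinite budget renders the constraint vacuous.

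For part (i), I would substitute $D^t = D/\sqrt{t} = 0$ into the feasibility constraint to obtain
\begin{equation}
\Ee_{G^t}\!\left[\|G^t - \nabla_{\theta}\lL(\theta)\|_2\right] \leq 0.
\end{equation}
Since $\|G^t - \nabla_{\theta}\lL(\theta)\|_2 \geq 0$ pointwise, a nonpositive expectation of a nonnegative random variable forces the random variable to vanish almost surely, i.e.\ $G^t = \nabla_{\theta}\lL(\theta)$ with probability one under any feasible $p(G^t)$. In particular $\Ee[G^t] = \nabla_{\theta}\lL(\theta)$, as claimed. Note that every feasible distribution is the Dirac mass at $\nabla_{\theta}\lL(\theta)$, so the infimum in (\ref{eq:formal_basic}) is attained trivially and $R(0)$ equals the penalty evaluated at the ERM update.

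For part (ii), I would observe that when $D = \infty$ the right-hand side of the constraint is $+\infty$ for every $t \geq 1$, so every distribution $p(G^t)$ is feasible. Hence the problem in (\ref{eq:formal_basic}) collapses to the unconstrained minimization
\begin{equation}
\min_{p(G^t)} \; \Ee_{G^t}\!\left[\texttt{Penalty}(\theta^t + G^t;\{x^i,e^i,y^i\})\right],
\end{equation}
whose minimizers are by definition the distributions that achieve the smallest expected penalty. This is precisely the statement that ``updates follow a distribution with the minimum penalty in expectation.''

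Neither step requires a genuine obstacle: both parts are essentially sanity checks on the formulation. The only pitfall worth flagging in the write-up is to be explicit that the constraint in (\ref{eq:formal_basic}) bounds an expectation of a norm (not just a norm of an expectation), which is what permits the ``a.s.\ equality'' conclusion in part (i) from a single scalar inequality, and to note that the extremes are endpoints of the convex non-increasing curve $R(D)$ used in Figure~\ref{fig:info_plane}.
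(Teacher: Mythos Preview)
Your proposal is correct and matches the paper's own argument essentially line for line: for (i) the paper also sets the constraint to $\leq 0$, invokes nonnegativity of the norm to conclude $G^t = \nabla_{\theta}\lL(\theta)$ almost surely, and for (ii) it likewise observes the constraint becomes vacuous and the problem reduces to unconstrained expected-penalty minimization. The only cosmetic difference is that the paper explicitly notes ``we assume $G^t$ is finite'' when declaring the constraint trivially satisfied in part (ii), which you might add for completeness.
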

\begin{proof}[Proof for i]
When $D^t=D=0$, the constraint in  (\ref{eq:formal_basic}) becomes,
\begin{equation}
\Ee_{G^t}\left[\|G^t - \nabla_{\theta}\hat{\lL}(\theta)\|_2 \right] \leq 0
\end{equation}
The norm is nonnegative, and a nonnegative random variable has zero expectation if and only if it is zero almost surely. Hence,
\begin{equation}
p(G^t=\nabla_{\theta}\hat{\lL}(\theta)) = 1
\end{equation}
\end{proof}
\begin{proof}[Proof for ii]
When $D^t=D=\infty$, the constraint in (\ref{eq:formal_basic}) is trivially satisfied for all distribution as we assume $G^t$ is finite. Hence, (\ref{eq:formal_basic}) is equivalent to;
\begin{equation}
R(D) = \min_{p(G^t)} \Ee_{G^t}\left[\texttt{Penalty}(\theta^t+G^t;\{x^i,e^i,y^i\})\right]
\end{equation}
where the solution is a distribution with the minimum penalty in expectation.
\end{proof}

\begin{proposition}[Monotonicity and Convexity]
$R(D)$ is a non-increasing and convex function of $D$.
\end{proposition}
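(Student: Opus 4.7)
The plan is to mimic the classical rate-distortion proof, exploiting that both the objective $\Ee_{G^t}[\texttt{Penalty}(\theta^t+G^t;\cdot)]$ and the constraint functional $\Ee_{G^t}[\|G^t - \nabla_\theta \hat{\lL}(\theta)\|_2]$ are linear in the distribution $p(G^t)$. This linearity makes the feasible set a convex set that grows monotonically with $D$, which is exactly what we need.

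For monotonicity, I would argue as follows. Fix $t$, let $D_1 \leq D_2$, and let $\fF(D)$ denote the set of distributions on $G^t$ satisfying the distortion constraint with parameter $D$. Since $\Ee_{G^t}[\|G^t - \nabla_\theta \hat{\lL}(\theta)\|_2] \leq D_1/\sqrt{t} \leq D_2/\sqrt{t}$, every $p \in \fF(D_1)$ also lies in $\fF(D_2)$. Minimizing the same objective over a larger set can only decrease it, so $R(D_2) \leq R(D_1)$.

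For convexity, I would pick any $D_1, D_2$ and $\lambda \in [0,1]$, and let $p_1, p_2$ be distributions achieving $R(D_1), R(D_2)$ respectively (or $\eps$-approximate minimizers if the infimum is not attained; the resulting $\eps$ can be pushed to zero at the end). Consider the mixture $p_\lambda = \lambda p_1 + (1-\lambda) p_2$. By linearity of expectation,
\begin{equation}
\Ee_{p_\lambda}[\|G^t - \nabla_\theta \hat{\lL}(\theta)\|_2] = \lambda \Ee_{p_1}[\cdot] + (1-\lambda)\Ee_{p_2}[\cdot] \leq \frac{\lambda D_1 + (1-\lambda) D_2}{\sqrt{t}},
\end{equation}
so $p_\lambda \in \fF(\lambda D_1 + (1-\lambda)D_2)$. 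Again by linearity, the objective value at $p_\lambda$ equals $\lambda R(D_1) + (1-\lambda) R(D_2)$, so the minimum at distortion $\lambda D_1 + (1-\lambda)D_2$ can only be smaller: $R(\lambda D_1 + (1-\lambda)D_2) \leq \lambda R(D_1) + (1-\lambda) R(D_2)$.

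I do not expect any serious obstacles here, since unlike the classical mutual-information rate-distortion function (where convexity requires a more careful argument because mutual information is not linear in the joint distribution), here the objective is a plain expectation and is therefore linear in $p$. The only minor subtlety is existence of minimizers, but this can be sidestepped by taking $\eps$-minimizers and letting $\eps \to 0$, or by noting that the problem reduces over a compact set once we restrict to finitely supported distributions as the authors do in Section~\ref{sec:sdg}.
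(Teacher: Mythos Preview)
Your proposal is correct and follows essentially the same route as the paper: monotonicity via nesting of feasible sets, and convexity via the mixture $p_\lambda=\lambda p_1+(1-\lambda)p_2$ together with linearity of both the distortion constraint and the penalty objective in $p$. The only addition you make is the $\eps$-minimizer remark to handle non-attainment of the infimum, which the paper omits but which does not change the argument.
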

\begin{proof}
$R(D)$ is minimum of the penalty over increasingly larger sets as $D$ increases. Hence, $R(D)$ is non-increasing in $D$. 

In order to prove convexity, consider two solutions $p(G^t_1)$ and $p(G^t_2)$, minimizing $R(D)$ for given $D^t_1, D^t_2$. Now, consider the distribution \mbox{$p(G^t_{\lambda}) = \lambda p(G^t_1) + (1-\lambda) p(G^t_2)$} and $D^t_{\lambda}=\lambda D^t_1 + (1-\lambda) D^t_2$. Since the $R(D^t_{\lambda})$ is minimization over all distributions, it is smaller than penalty for $p(G^t_{\lambda})$. In other words,

\begin{equation}
R(D^t_{\lambda}) \leq \Ee_{G^t \sim p(G^t_{\lambda})} \left[\texttt{Penalty}(\theta^t+G^t;\{x^i,e^i,y^i\})\right]
\end{equation}

Since expectation is a linear operator,

{\small
\begin{equation}
\begin{aligned}
&\Ee_{G^t \sim p(G^t_{\lambda})} \left[\texttt{Penalty}(\theta^t+G^t;\{x^i,e^i,y^i\})\right]\\
 & = \lambda \Ee_{G^t \sim p(G^t_{1})} \left[\texttt{Penalty}(\theta^t+G^t;\{x^i,e^i,y^i\})\right] + (1-\lambda) \Ee_{G^t \sim p(G^t_{2})} \left[\texttt{Penalty}(\theta^t+G^t;\{x^i,e^i,y^i\})\right]
\end{aligned}
\end{equation}}

Since $p(G^t_1)$ and $p(G^t_2)$ are the solutions for the corresponding minimization problems, 
\begin{equation}
\begin{aligned}
\Ee_{G^t \sim p(G^t_{1})} \left[\texttt{Penalty}(\theta^t+G^t;\{x^i,e^i,y^i\})\right]&=R(D^t_1) \\
\Ee_{G^t \sim p(G^t_{2})} \left[\texttt{Penalty}(\theta^t+G^t;\{x^i,e^i,y^i\})\right]&=R(D^t_2)
\end{aligned}
\end{equation}
Combining these two statements proves the convexity as;
\begin{equation}
R(\lambda D^t_1 + (1-\lambda) D^t_2) \leq \lambda R(D^t_1) + (1-\lambda) R(D^t_2)
\end{equation}
\end{proof}

\section{Missing Derivations for Blahut-Arimoto and its Application to CORAL, FISH and VRex.}
\subsection{Deriving Blahut-Arimoto Style Method to Solve (\ref{eq:entropy_reg})}
\label{sec:ba_derivation}
We develop a Blahut-Arimoto \citep{Blahut72, Arimoto72} style method to solve the following problem:
\begin{equation}
\begin{aligned}
\min_{p(G^t)} & \quad \Ee_{G^t}\left[\texttt{Penalty}(\theta^t+G^t;\{x^i,e^i,y^i\})\right] + \gamma \mathcal{I}(G^t;E) \\
&st. \quad \Ee_{G^t}\left[\|G^t - \nabla_{\theta}\hat{\lL}(\theta)\|_2 \right] \leq \frac{D}{\sqrt{t}}
\end{aligned}
\label{eq:repeat}
\end{equation}
Before we apply the Blahut-Arimoto technique, we first utilize the definition of mutual information to convert (\ref{eq:repeat}) into 
\begin{equation}
\begin{aligned}
\min_{p(G^t)} & \quad \Ee_{G^t}\left[\texttt{Penalty}(\theta^t+G^t;\{x^i,e^i,y^i\})\right] + \gamma \Ee_{E}[\mathcal{D}_{KL}(p(G|E)||p(G))] \\
&st. \quad \Ee_{G^t}\left[\|G^t - \nabla_{\theta}\hat{\lL}(\theta)\|_2 \right] \leq \frac{D}{\sqrt{t}}
\end{aligned}
\end{equation}
where $\mathcal{D}_{KL}$ is the KL-Divergence. 

The key technique used in the Blahut-Arimoto method is decomposing $p(G^t)$ as \mbox{$p(G^t) = \sum_{e} p(G^t|E=e)p(E=e)$}. Denote $d(G^t,e)=\|G^t-\nabla\lL^e(\theta)\|_2$ where $\lL^e$ is the loss for domain $e$ and $|E|$ as the number of domains. Moreover, we assume all domains are equal in importance as $p(E=e)=\nicefrac{1}{|E|}$. Using the fact that the problem is discrete, $G^t \in \{G_1,\ldots,G_K\}$, the problem in (\ref{eq:repeat}) can be transformed into

{\small
\begin{equation}
\begin{aligned}
\min_{p(G_k|E=e)} & \sum_{e} \sum_{k=1}^K p(G_k|E=e)\texttt{Penalty}(\theta^t+G_k;\{x^i,e^i,y^i\}) + \gamma \sum_{e} \sum_{k=1}^K p(G_k|E=e) \log \frac{p(G_k|E=e)}{p(G_k)} \\
&st. \quad \sum_{e} \sum_{k=1}^K p(G_k|E=e) d(G_k,e) \leq \frac{D|E|}{\sqrt{t}}
\end{aligned}
\end{equation}}

here we replaced $\Ee_{G^t}\left[\|G^t - \nabla_{\theta}\hat{\lL}(\theta)\|_2 \right] \leq \frac{D}{\sqrt{t}}$ with a stronger constraint of uniform bounds over domains as $\Ee_{G^t|E=e}\left[\|G^t - \nabla_{\theta}\hat{\lL}^e(\theta)\|_2 \right] \leq \frac{D}{\sqrt{t}}$. We further compute the Lagrangian of this optimization problem with the Lagrange multiplier $\beta$ as;
\begin{equation}
\begin{aligned}
\min_{p(G_k|E=e)} \max_{\beta \geq 0} \sum_{e} \sum_{k=1}^K  p(G_k|E=e) &\left[\texttt{Penalty}(\theta^t+G_k;\{x^i,e^i,y^i\}) + \ldots \right. \\ & \left. \ldots + \gamma \frac{\log p(G_k|E=e)}{p(G_k)} + \beta d(G_k,e) \right] - \beta \frac{D|E|}{\sqrt{t}}
\end{aligned}
\end{equation}
We take the derivative with respect to $p(G_k|E=e)$ and equate it to $0$. Ignoring the constant factor,
\begin{equation}
p(G_k|E=e) \sim  p(G_k) \exp\left[-\frac{1}{\gamma}\left(\texttt{Penalty}(\theta^t+G_k;\{x^i,e^i,y^i\}) + \beta  d(G_k,e) \right)\right]
\end{equation}

Since we know the probabilities sum to $1$, we can handle the normalization factor separately. Start with initialization $p^{\circ}(G_k|E=e)$, which is typically uniform unless a prior information exists. Then, the following iterations sove (\ref{eq:repeat}),
\begin{itemize}
\item $\hat{p}^{l+1}(G_k|E=e) =  p^{l}(G_k) \exp\left[-\frac{1}{\gamma}\left(\texttt{Penalty}(\theta^t+G_k;\{x^i,e^i,y^i\}) + \beta  d(G_k,e) \right)\right]$
\item $p^{l+1}(G_k|E=e) = \frac{\hat{p}^{l+1}(G_k|E=e)}{\sum_{\hat{k}}\hat{p}^{l+1}(G_{\hat{k}}|E=e)}$
\item $p^{l+1}(G_k) = \nicefrac{1}{|E|}\sum_{E=e} p^{l+1}(G_k|E=e)$
\end{itemize}

\subsection{Applying Blahut-Arimoto Style Method to Penalty Based Deep Domain Generalization}
\label{sec:app_coral_sdg}
In this section, we develop the CORAL+SDG and VREX+SDG method considering an arbitrary penalty function. To apply the derivation in Section~\ref{sec:ba_derivation}, we need to compute \mbox{$\texttt{Penalty}(\theta^t+G_k;\{x^i,e^i,y^i\})$} for an arbitrary $G^k$. We consider the first order approximation of this penalty as, 
\begin{equation}
\texttt{Penalty}(\theta^t+G_k;\{x^i,e^i,y^i\}) \approx \texttt{Penalty}(\theta^t;\{x^i,e^i,y^i\}) + \nabla_{\theta} \texttt{Penalty}(\theta^t;\{x^i,e^i,y^i\}) ^\intercal G_k 
\end{equation}
By this approximation, we do not need to perform additional computations for $\texttt{Penalty}$ for each $G_k$, instead we can compute the derivative once and perform dot-products for the rest. We substitute this approximation in the Blahut-Arimoto iteration eventually obtaining;
\begin{itemize}
\item $\hat{p}^{l+1}(G_k|E=e) =  p^{l}(G_k) \exp\left[-\frac{1}{\gamma}\left(\beta  d(G_k,e) + \nabla_{\theta}\texttt{Penalty}(\theta^t)^\intercal G_k \right)\right]$
\item $p^{l+1}(G_k|E=e) = \frac{\hat{p}^{l+1}(G_k|E=e)}{\sum_{\hat{k}}\hat{p}^{l+1}(G_{\hat{k}}|E=e)}$
\item $p^{l+1}(G_k) = \nicefrac{1}{|E|}\sum_{E=e} p^{l+1}(G_k|E=e)$
\end{itemize}
We further apply two approximations, i) solving each parameter independently, and ii) solving only for the direction of the gradient (positive or negative). When these approximations are applied, the resulting iterations can be efficiently performed via vector operations and summarized in Algorithm~\ref{alg:coral_sdg}.

\begin{algorithm}[ht]
\caption{PenaltyBasedDeepDG+SDG-Single Iteration}\label{alg:cap}
\label{alg:coral_sdg}
\begin{algorithmic}
\State $G_+=0, G_-=0$ \Comment{Initialize positive grad, and negative grad with $0$ Vector}
\State $G_{penalty} = \nabla_{\theta}L^{penalty}(\theta^t)$ \Comment{Gradient of the penalty}
\For{$e \in \{1,\ldots,|E|\}$} \Comment{For each domain}
\State $G_e = \frac{1}{|E|} \nabla_{\theta} \lL^e(\theta^t)$
\State $G_+= G_+ + \mathds{1}[G_e>0] \cdot G_e $ \Comment{Parameters with positive gradients}
\State $G_-= G_- + \mathds{1}[G_e<0] \cdot G_e $ \Comment{Parameters with negative gradients}
\EndFor
\State $Prob_+=\textsc{BlahutArimotoStyleSolver}(G_1,\ldots,G_{|E|},G_{penalty})$
\State $P_+ \sim Bern(Prob_+)$ \Comment{Sample the gradient directions}
\State $G = P_+ \cdot G_{+} + (1-P_+) \cdot G_{-}$
\State $\theta^{t+1} = \theta^t - \eta G$
\Procedure{BlahutArimotoStyleSolver}{$G_1,\ldots,G_{|E|},G_{penalty}$}
\State \textbf{Initialize:}  $Prob_+=Prob_-=0.5$ \Comment{Initialize probabilities uniformly}
\For{$iter = 1,\ldots,IterCount$}
\For{$e \in \{1,\ldots,|E|\}$} \Comment{All Operations are Elementwise during the Loop}
\State $\tilde{Prob}_{+,e} = Prob_{+} \cdot \exp\left(-\nicefrac{1}{\gamma}\left[\beta(G_e-G_{+})\cdot(G_e-G_{+}) + G_{penalty} \cdot G_{+}\right]\right)$ 
\State $\tilde{Prob}_{-,e} = Prob_{-} \cdot \exp\left(-\nicefrac{1}{\gamma}\left[\beta(G_e-G_{+})\cdot(G_e-G_{+}) + G_{penalty} \cdot G_{+}\right]\right)$ 
\State $Prob_{+,e} =\tilde{Prob}_{+,e} / (\tilde{Prob}_{+,e}+\tilde{Prob}_{-,e})$ 
\State $Prob_{-,e} = 1 - Prob_{+,e}$
\EndFor
\State $Prob_{+} = \nicefrac{1}{|E|} \sum_{e} Prob_{+,e}$
\State $Prob_{-} = 1 - Prob_{+}$
\EndFor
\State \Return $Prob_{+}$
\EndProcedure
\end{algorithmic}
\end{algorithm}

\subsection{Applying Blahut-Arimoto Style Method to Fish \citep{FISH}}
\label{app_sec:fish}
Fish \citep{FISH} is not using a direct penalty function; hence, applying SDG requires some modifications. In order to apply the SDG to Fish, we simply use the approximation provided in the original work \citep{FISH}. Specifically, Shi et al.~\citep{FISH} shows that when an inner update of SGD is applied to $\theta^t$ for a few domains to obtain $\tilde{\theta}^t$, $\tilde{\theta}^t-\theta^t$ approximates the gradient of the penalty. Hence, we utilize this fact to perform the iterations as 
\begin{itemize}
\item $\hat{p}^{l+1}(G_k|E=e) =  p^{l}(G_k) \exp\left[-\frac{1}{\gamma}\left(\beta  d(G_k,e) + (\tilde{\theta}^t-\theta)^\intercal G_k \right)\right]$
\item $p^{l+1}(G_k|E=e) = \frac{\hat{p}^{l+1}(G_k|E=e)}{\sum_{\hat{k}}\hat{p}^{l+1}(G_{\hat{k}}|E=e)}$
\item $p^{l+1}(G_k) =\nicefrac{1}{|E|}\sum_{E=e} p^{l+1}(G_k|E=e)$
\end{itemize}

\section{Generalization Gap in WILDS \citep{Wilds}}
\label{app_sec:generalization_gap}
We argued that when the empirical estimation error of the invariance penalty is high, the resulting optimization problem might hurt empirical risk (in-distribution performance) and fails to perform out-of-distribution even if it generalizes well. To further quantify this motivation, we look at the benchmarking results of \citep{Wilds}. We specifically enclosed the out-of-distribution and in-distribution results of various benchmarked algorithms from \citep{Wilds} in Table~\ref{tab:res1}\&\ref{tab:res2}. Moreover, we compute the generalization gap as the difference between in-distribution and out-of-distribution performance and tabulate them in Table~\ref{tab:gen_gap}.

The results confirm the failure mode we hypothesized. Specifically, ERM outperforms existing domain generalization algorithms in almost all benchmarks when the metric is out-of-distribution performance. However, existing domain generalization methods generalize better (generalization gap is smaller than ERM). Hence, the failure of these methods is not due to the lack of generalization but due to the lack of effective minimization of the empirical risk.

\begin{table*}[th]
\centering
{\small
\caption{\textbf{Out-of-distribution test performance} of the benchmarked algorithm reported by \citep{Wilds}. ERM overwhelmingly outperforms existing domain generalization algorithms.}
\label{tab:res1}
\begin{sc}
\resizebox{\textwidth}{!}{
\begin{tabular}{@{}l@{\hspace{4mm}}l@{\hspace{4mm}}S[
        table-format=2.1,
		separate-uncertainty = true,
		table-figures-uncertainty = 1
	]@{\hspace{4mm}}S[
        table-format=2.1,
		separate-uncertainty = true,
		table-figures-uncertainty = 1
	]@{\hspace{4mm}}S[
        table-format=2.1,
		separate-uncertainty = true,
		table-figures-uncertainty = 1
	]@{\hspace{4mm}}S[
        table-format=2.1,
		separate-uncertainty = true,
		table-figures-uncertainty = 1
	]@{}}
\toprule
Dataset & Metric & {ERM} & {CORAL} & {IRM} &{GroupDRO} \\ 
\midrule
iWildCam   & Macro F1               & 31.0 \pm 1.3  & \bfseries 32.8 \pm 0.1  & 15.1 \pm 4.9  & 23.9 \pm 2.1 \\
Camelyon17 & Average Acc.           & \bfseries 70.3 \pm 6.4  & 59.5 \pm 7.7  & 64.2 \pm 8.1  & 68.4 \pm 7.3 \\
RxRx1      & Average Acc.           & \bfseries 29.9 \pm 0.4  & 28.4 \pm 0.3  & 8.2 \pm 1.1   & 23.0 \pm 0.3 \\
OGBMolPCBA & Average AP             & \bfseries 27.2 \pm 0.3  & 17.9 \pm 0.5  & 15.6 \pm 0.3  & 22.4 \pm 0.6 \\
FMoW       & Worst Region Acc.      & \bfseries 32.3 \pm 1.3  & 31.7 \pm 1.2  & 30.0 \pm 1.4  & 30.8 \pm 0.8 \\
PovertyMap & Worst Region Pearson R & \bfseries 0.45 \pm 0.06 & 0.44 \pm 0.06 & 0.43 \pm 0.07 & 0.39 \pm 0.06 \\
Amazon     & 10th Percentile Acc.   & \bfseries 53.8 \pm 0.8  & 52.9 \pm 0.8  & 52.4 \pm 0.8  & 53.3 \pm 0.1 \\
Py150      & Method/Class Acc       & \bfseries 67.9 \pm 0.1  & 65.9 \pm 0.1  & 64.3 \pm 0.2  & 65.9 \pm 0.1 \\ \bottomrule
\end{tabular}}
\end{sc}}
\vspace{10mm}
\centering
\caption{\textbf{In-distribution test performance} of the benchmarked algorithm reported by \citep{Wilds}. ERM overwhelmingly outperforms existing domain generalization algorithms. Hence, domain generalization algorithms struggle to minimize empirical risk.}
\label{tab:res2}
\begin{sc}
\resizebox{\textwidth}{!}{
{\small
\begin{tabular}{@{}l@{\hspace{4mm}}l@{\hspace{4mm}}S[
        table-format=2.1,
		separate-uncertainty = true,
		table-figures-uncertainty = 1
	]@{\hspace{4mm}}S[
        table-format=2.1,
		separate-uncertainty = true,
		table-figures-uncertainty = 1
	]@{\hspace{4mm}}S[
        table-format=2.1,
		separate-uncertainty = true,
		table-figures-uncertainty = 1
	]@{\hspace{4mm}}S[
        table-format=2.1,
		separate-uncertainty = true,
		table-figures-uncertainty = 1
	]@{}}
\toprule
Dataset & Metric & {ERM} & {CORAL} & {IRM} & {GroupDRO} \\ 
\midrule
iWildCam   & Macro F1               & \bfseries 47.0 \pm 1.4  & 43.5 \pm 3.5  & 22.4 \pm 7.7  & 37.5 \pm 1.7 \\
Camelyon17 & Average Acc.           & 93.2 \pm 5.2  & \bfseries 95.4 \pm 3.6  & 91.6 \pm 7.7  & 93.7 \pm 5.2 \\
RxRx1      & Average Acc.           & \bfseries 35.9 \pm 0.4  & 34.0 \pm 0.3  & 9.9 \pm 1.4   & 28.1 \pm 0.3 \\
OGBMolPCBA & Average AP             & \bfseries 27.8 \pm 0.1  & 18.4 \pm 0.2  & 15.8 \pm 0.2  & 23.1 \pm 0.6 \\
CivilComments & Worst Group Acc     &  50.5 \pm 1.9  & 64.7 \pm 1.4  & 65.9 \pm 2.8  & \bfseries 67.7 \pm 1.8 \\  
FMoW       & Worst Region Acc.      & \bfseries 58.3 \pm 0.92  & 55.0 \pm 1.02  & 56.0 \pm 0.34  & 57.8 \pm 0.60 \\
PovertyMap & Worst Region Pearson R &  0.57 \pm 0.07 & \bfseries 0.59 \pm 0.03 & 0.57 \pm 0.08 & 0.54 \pm 0.11 \\
Amazon     & 10th Percentile Acc.   & \bfseries 57.3 \pm 0.0  & 55.1 \pm 0.4  & 54.7 \pm 0.1  & 55.8 \pm 1.0 \\
Py150      & Method/Class Acc       & \bfseries 75.4 \pm 0.4  & 70.6 \pm 0.0  & 67.3 \pm 1.1  & 70.8 \pm 0.0 \\ \bottomrule
\end{tabular}}}
\end{sc}
\vspace{10mm}
\centering

\caption{\textbf{Generalization Gap} (difference between in-distribution and out-distribution performance) in the WILDS benchmarks \citep{Wilds}. We denote the results which generalize better than ERM with {\color{ForestGreen}Green} color and worse than ERM with {\color{red}Red} color. Domain generalization algorithms typically generalize better than ERM. Hence, although they are good generalizers, they fail to perform out-of-distribution due to poor in-distribution performance.}
\label{tab:gen_gap}
{\small
\begin{sc}
\begin{tabular}{@{}l@{\hspace{2mm}}l@{\hspace{2mm}}S[
		table-text-alignment = center,
		table-number-alignment = center,
		table-figures-uncertainty = 1
	]@{\hspace{2mm}}S[
		table-text-alignment = center,
		table-number-alignment = center,
		table-figures-uncertainty = 1
	]@{\hspace{2mm}}S[
		table-text-alignment = center,
		table-number-alignment = center,
		table-figures-uncertainty = 1
	]@{\hspace{2mm}}S[
		table-text-alignment = center,
		table-number-alignment = center,
		table-figures-uncertainty = 1
	]@{}}
\toprule
Dataset & Metric & \multicolumn{1}{c}{ERM} & \multicolumn{1}{c}{CORAL} & \multicolumn{1}{c}{IRM} & \multicolumn{1}{c}{GroupDRO} \\ 
\midrule
iWildCam   & Macro F1               & 16   & \color{ForestGreen} 10.7 & \color{ForestGreen} \bfseries 7.3  & \color{ForestGreen} 13.6 \\
Camelyon17 & Average Acc.           & \bfseries 22.9 & \color{red} 35.9 & \color{red} 27.4 & \color{red} 25.3 \\
RxRx1      & Average Acc.           & 6    & \color{ForestGreen} 5.6  &\color{ForestGreen} \bfseries 1.7  & \color{ForestGreen} 5.1  \\
CivilComments & Worst Group Acc     & 5.5  & \color{ForestGreen} 0.9  & \color{ForestGreen} \bfseries 0.4  &\color{ForestGreen}  2.3  \\
OGBMolPCBA & Average AP             & 0.6  & \color{ForestGreen} 0.5  & \color{ForestGreen} \bfseries 0.2  & \color{red} 0.7  \\
FMoW       & Worst Region Acc.      & 26   & \color{ForestGreen} \bfseries 23.3 & 26   & \color{red} 27   \\
PovertyMap & Worst Region Pearson R & \bfseries 0.12 & \color{red} 0.15 & \color{red} 0.14 &  \color{red} 0.15 \\
Amazon     & 10th Percentile Acc.   & 3.5  & \color{ForestGreen} \bfseries 2.2  &\color{ForestGreen} 2.3  &\color{ForestGreen} 2.5 \\
Py150      & Method/Class Acc       & 7.5  &\color{ForestGreen} 4.7  &\color{ForestGreen} \bfseries 3    &\color{ForestGreen} 4.9 \\ \bottomrule
\end{tabular}
\end{sc}}
\end{table*}
\clearpage


\section{Details on Used Datasets}
\label{app_sec:datasets}
\noindent \textbf{iWildCam} \citep{terra_incognita}: The problem is predicting animal species from camera trap images over different locations. The dataset includes 323 different camera traps, considered as domains. Among these, 243 are used for training, 32 for validation, and 48 for testing. The evaluation metric is the F1-scores. 

\noindent \textbf{Camelyon17} \citep{Camelyon17}: The problem is predicting tumors from tissue slides over 5 different hospitals, treated as domains. Among these, 3 are used as training, 1 for validation, and 1 for test. The metric is the accuracy over balanced patches (\ie same number of positive and negative patches). 

\noindent \textbf{OGB-MolPCBA} \citep{OGB}: The problem is predicting bioassays from molecular graphs over 120,084 scaffolds, considered as domains. The largest 44,390 scaffolds are used for training, the next largest 31,361 are for validation, and the smallest 43,793 are for testing. The metric is average precision.  

\noindent \textbf{Functional Map of the World (FMoW)} \citep{Fmow}: The problem is predicting the land usage from a satellite image from 5 regions over 16 years, a total of $5\times 16=80$ domains. From 16 years of data, 11 years are used for training, 3 for validation, and 2 for testing. The evaluation metric is the accuracy over the worst region to evaluate the sub-population shift.

\noindent \textbf{Amazon Reviews} \citep{Amazon}: The problem is predicting sentiment from product reviews over 3920 different users considered as domains. Among those, 1252 users are used for training, 1334 for validation, and 1334 for testing. The evaluation metrics are accuracy averaged over users in the 10th percentile. 

\noindent \textbf{py150} \citep{py150}: The problem is autocompletion of Python code over 8421 git repositories as domains. Among them, 5477 are used for training, 261 for validation, and 2471 for testing. The evaluation is the accuracy of predicting class/method name tokens. 

\textbf{PovertyMap} \citep{Poverty}: The problem is predicting asset wealth from satellite images over different countries. Evaluation is over 5 different folds where 13-14 countries are used for training, 4-5 for validation, and 4-5 for the test. The metric is the Pearson correlation averaged over the rural regions.

\section{Hyper Parameters}
\label{sec:appendix_hparams}
We share the implementation of our method and experimental setup for all WILDS experiments. In order to reproduce the experiments, the scripts can be run with the following hyperparameters. Any hyperparameter not shared here, directly follows the default one shared by \citep{Wilds}. For DomainBed experiments, we directly used the recommended hyperparameter search pipeline without any modifications.

We perform a hyperparameter search for the WILDS experiments with the same budget reported in \citep{Wilds}. The chosen hyperparameters are given in Table~\ref{tab:app_hparam}. 

\begin{table}[H]
\centering
\caption{Chosen hyperparameters for the WILDS \citep{Wilds} experiments.}
\renewcommand{\arraystretch}{1.5}
\begin{tabular}{@{}lccc@{}}
\toprule
				& Learning Rate & Batch Size & N Groups per Batch \\ \midrule
{iWildCam}		&	\num{3e-5}	&	32	&	4  \\
{Camelyon17}	&	\num{5e-3}	&	60	&	3  \\
{OGB-MolPCBA}	&	\num{1e-3}	&	64	&	32 \\
{FMoW}			&	\num{1e-4}	&	32	&	4  \\
{Amazon}		&	\num{1e-5}	&	32	&	8  \\
{Py150}			&	\num{1e-5}	&	6	&	2  \\
{PovertyMap}	&	\num{5e-4}	&	64	&	4  \\  \bottomrule
\end{tabular}
\label{tab:app_hparam}
\end{table}

\section{Societal Impact}
\label{sec:app_societal}
Our work studies the optimization of domain generalization algorithms by re-formulating the joint optimization problem of domain generalization as constrained optimization of penalty under the constraint of optimality of the empirical risk. Our approach is purely algorithmic and applicable to any penalty-based method. Hence, it does not change the societal impact of using the original method and dataset. It is still beneficial to re-iterate some aspects of the societal impact of the state of the domain generalization algorithms. Our work, among others, shows that immediate deployment of machine learning models without investigating the fairness and safety when applied to the target distribution might not be feasible. Performance and behavior on unseen distributions are still far from being predictable. Hence, particular care should be spent studying and monitoring the target distribution and its changes over time.

Moreover, a critical aspect of societal impact is the datasets. Fortunately, WILDS benchmark \citep{Wilds} provides a comprehensive analysis for the datasets we use. We refer the interested reader to this comprehensive analysis. 

\end{document}